\newtheorem{theorem}{Theorem}
\title{Multi-Modal Robust Enhancement for Coastal Water Segmentation: \\
A Systematic HSV-Guided Framework}
\name{Zhen Tian$^{1}$, Christos Anagnostopoulos$^{1,*}$, Qiyuan Wang$^{1}$, Zhiwei Gao$^{2}$\thanks{$*$ Corresponding Author}}
\address{$^{1}$ School of Computing Science, University of Glasgow\\
         $^{2}$ School of Computing Engineering, University of Glasgow}
\definecolor{CustomBlue}{RGB}{25, 25, 112}      
\definecolor{CustomGreen}{RGB}{0, 100, 0}       
\definecolor{CustomPurple}{RGB}{102, 51, 153}   
\definecolor{CustomTeal}{RGB}{0, 128, 128}      
\begin{document}
%
\maketitle
\begin{abstract}
Coastal water segmentation from satellite imagery presents unique challenges due to complex spectral characteristics and irregular boundary patterns.
Traditional RGB-based approaches often suffer from training instability and poor generalization in diverse maritime environments. 
This paper introduces a systematic robust enhancement framework, referred to as Robust U-Net, that leverages HSV color space supervision and multi-modal constraints for improved coastal water segmentation. Our approach integrates five synergistic components: 
HSV-guided color supervision, gradient-based coastline optimization, morphological post-processing, sea area cleanup, and connectivity control. Through comprehensive ablation studies, we demonstrate that HSV supervision provides the highest impact (0.85 influence score), while the complete framework achieves superior training stability (84\% variance reduction) and enhanced segmentation quality. 
Our method shows consistent improvements across multiple evaluation metrics while maintaining computational efficiency. 
For reproducibility, our training configurations and code are available here
\footnote{https://github.com/UofgCoastline/ICASSP-2026-Robust-Unet}.
\end{abstract}
\begin{keywords}
Semantic segmentation, Coastline detection, HSV color space, robust training, Physics-informed multi-objective learning.
\end{keywords}
\section{Introduction}
\label{sec:intro}
Coastal water segmentation from satellite and aerial imagery is a core task in remote sensing, with applications in environmental monitoring, climate studies, and coastal management~\cite{muir2024vedgesat}. Traditional RGB-based methods struggle in maritime scenes due to spectral variability, atmospheric effects, and complex shorelines~\cite{garcia2015differentiating,dierssen2021living,yan2023assessing}.
Deep learning methods, especially U-Net variants, achieve strong results in semantic segmentation~\cite{wu2024unet}. However, for coastal waters, they face several issues: 
(1) unstable training from class imbalance and spectral similarity, 
(2) weak boundary detection in complex coasts, 
(3) false positives in deep sea, and 
(4) unrealistic coastline connectivity~\cite{li2023u,zhao2023improved,attya2025hybrid,mahmoud2025bdcn_unet,wang2025semantic,gonzalez2022deep}.

Our key idea is that water bodies show clear patterns in the HSV color space~\cite{qing2025hg2former}, with lower saturation and stable hue ranges across lighting and water types. In addition, coastal boundaries follow geometric rules that can guide segmentation. Our contribution is summarized as follows:

\begin{itemize}
\item We prove that each loss component is Lipschitz continuous, and that the composite objective ensures convergence of gradient descent (Theorem~1 in Section~3.2), thereby guaranteeing reliable optimization.  
\item We show that these spectral–geometric constraints shrink the effective hypothesis space (Theorem~2 in Section~3.2), providing theoretical support for their stabilizing role in water–land separation.  
\item We provide comprehensive experiments that demonstrate clear performance gains over state-of-the-art baselines in both stability and boundary accuracy.  
\end{itemize}

\section{Related Work}
\subsection{Deep Learning for Water Segmentation}
Traditional water segmentation methods relied on spectral indices such as NDWI and MNDWI \cite{mahmood2025drought,justiniano2025new}. With the advent of deep learning, convolutional neural networks have become the dominant approach. 
U-Net \cite{liu2025mim} and its variants have shown particular success in biomedical and remote sensing applications. Recent works have explored various architectural improvements including attention mechanisms \cite{jonnala2025dsia}, multi-scale fusion \cite{gao2025msfm}, and encoder-decoder enhancements. However, most approaches ignore addressing the fundamental challenges of color space representation and training stability.

\subsection{Color Space Utilization}
While RGB remains the standard representation, alternative color spaces have demonstrated advantages in specific applications. HSV color space has been successfully applied in object tracking \cite{chen2025hyperspectral}, and agricultural monitoring \cite{johari2025corn}. However, its systematic integration into deep learning frameworks for water segmentation remains underexplored.

\subsection{Robust Training in Semantic Segmentation}
Training stability in semantic segmentation has gained attention due to gradient instability and convergence issues. Various regularization techniques including label smoothing, focal loss, and consistency training have been proposed. 
Our work differs in that we explicitly integrate spectral priors using HSV-based supervision and geometric constraints into the training objective. Unlike generic regularization, these domain-informed components reduce the effective hypothesis space and enforce spatial coherence, leading to more stable and interpretable convergence, as will be shown in our evaluation.

\section{Methodology}
We introduce a novel framework, coined Robust U-Net, for robust coastline delineation that integrates physics-informed supervision, geometric regularization, and regional consistency enforcement. 
Robust U-Net is grounded in two fundamental theoretical properties that ensure convergence guarantees and hypothesis space reduction, providing both theoretical rigor and practical effectiveness for coastal boundary extraction in satellite imagery.

\subsection{Overview}
Let $\mathbf{I} \in \mathbb{R}^{H \times W \times 3}$ be an input satellite image of dimensions $H \times W$ with three RGB channels. Our objective is to learn a mapping $f_\theta: \mathbb{R}^{H \times W \times 3} \rightarrow [0,1]^{H \times W}$ parameterized by $\theta$, which produces a water probability mask $\mathbf{M} = f_\theta(\mathbf{I})$, where $M_{ij}$ represents the probability that pixel $(i,j)$ belongs to a water region. 
Our methodology achieves robust coastline delineation through a synergistic design of multiple loss components and post-processing, formulated as a unified optimization problem that balances semantic accuracy, spectral consistency, geometric regularity, and spatial coherence.

\subsection{Theoretical Guarantees}
We provide two theoretical results that formalize the robustness of our framework. 
The first establishes the convergence properties of our composite loss, while the second demonstrates how spectral and geometric priors reduce the effective hypothesis space.
Proof sketches are provided below; full technical details of the proofs are omitted due to space limitations.

\begin{theorem}[Convergence Guarantee]
Each component loss $\mathcal{L}_i$ in our framework is Lipschitz continuous with constant $L_i$, ensuring that the composite objective satisfies
\begin{equation}
\|\mathcal{L}_{Robust}(\mathbf{M}_1) - \mathcal{L}_{Robust}(\mathbf{M}_2)\| \leq L \|\mathbf{M}_1 - \mathbf{M}_2\|
\end{equation}
where $L = \max_i \lambda_i L_i$ and $\lambda_i$ are the loss balancing coefficients. 
This guarantees convergence of gradient descent to a stationary point at rate $O(1/\sqrt{T})$, for $T$ iterations.
\end{theorem}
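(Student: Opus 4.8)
The plan is to prove the two clauses separately: first the Lipschitz bound on the composite loss, then the resulting convergence rate. Throughout I would exploit that the optimization variable $\mathbf{M}$ lives in the compact convex set $[0,1]^{H\times W}$, so that uniform bounds on each component's gradient translate, via the mean value inequality, into Lipschitz continuity on the feasible domain. This reduces the first clause to a per-component calculation.

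First I would establish Lipschitz continuity of each differentiable component $\mathcal{L}_i$ by bounding $\sup_{\mathbf{M}} \|\nabla \mathcal{L}_i(\mathbf{M})\| \le L_i$ on $[0,1]^{H\times W}$. For the pixelwise cross-entropy/Dice-type data term this follows once probabilities are clipped to $[\epsilon, 1-\epsilon]$; for the HSV color-supervision term I would use that the RGB-to-HSV map is piecewise continuously differentiable with bounded derivatives away from the grey axis; and for the gradient-based coastline term I would observe that discrete difference operators are linear, so their contribution is Lipschitz with constant equal to the operator (spectral) norm. The genuinely non-differentiable steps---morphological post-processing, sea-area cleanup, and connectivity control---are not trainable losses in the strict sense; I would either replace them by smooth surrogates (e.g.\ soft-morphology) for the purpose of the bound, or restrict the theorem to the differentiable sub-objective and treat the remainder as a post-hoc projection.

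Given the per-component constants, the composite bound follows from linearity and the triangle inequality. For $\mathcal{L}_{Robust}=\sum_i \lambda_i \mathcal{L}_i$ this yields $L \le \sum_i \lambda_i L_i$; to match the stated constant $L=\max_i \lambda_i L_i$ I would either impose the normalization $\sum_i \lambda_i = 1$ together with a worst-case (rather than additive) aggregation, or absorb the number of terms into the constant---this discrepancy is the first point I would tighten in the writeup.

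The convergence clause is where the main obstacle lies, and it requires strengthening the hypothesis. Lipschitz continuity of the loss value alone does not, for a nonconvex objective, imply convergence of gradient descent to a stationary point at rate $O(1/\sqrt{T})$; the standard guarantee needs Lipschitz continuity of the gradient ($L$-smoothness). I would therefore re-derive the component bounds at the level of gradients, establishing that each $\nabla\mathcal{L}_i$ is $L_i$-Lipschitz, whence $\nabla\mathcal{L}_{Robust}$ is $(\sum_i \lambda_i L_i)$-Lipschitz. Then the classical descent lemma gives, for step size $\eta = 1/L$,
\[
\min_{t\le T}\,\|\nabla\mathcal{L}_{Robust}(\mathbf{M}_t)\|^2 \;\le\; \frac{2L\bigl(\mathcal{L}_{Robust}(\mathbf{M}_0)-\mathcal{L}^\star\bigr)}{T},
\]
so that $\min_{t\le T}\|\nabla\mathcal{L}_{Robust}(\mathbf{M}_t)\| = O(1/\sqrt{T})$, the advertised rate. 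The crux of a fully rigorous argument is thus bridging from the stated function-value Lipschitz property to gradient-Lipschitzness, and handling the non-smooth post-processing blocks---I expect the latter to be the real sticking point, since morphological and connectivity operations are not differentiable and must be recast as smooth relaxations before any descent guarantee can apply.
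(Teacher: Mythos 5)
Your skeleton matches the paper's own argument, which is only a three-sentence sketch: assert per-component Lipschitz constants, let the composite inherit them, and invoke the standard nonconvex SGD analysis of Ghadimi--Lan for the $O(1/\sqrt{T})$ rate. Where you go further, you are right to do so, and on both counts you have identified weaknesses in the theorem as stated rather than gaps in your own plan. First, the aggregation constant: for $\mathcal{L}_{Robust}=\sum_i \lambda_i \mathcal{L}_i$ the triangle inequality gives $L\le\sum_i\lambda_i L_i$, and the paper's $L=\max_i\lambda_i L_i$ is not justified anywhere in its sketch; your suggestion to either normalize the $\lambda_i$ or simply report the sum is the correct repair. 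Second, and more substantively, the paper's sketch slides from ``Lipschitz continuity of each $\mathcal{L}_i$'' to ``bounded gradient differences,'' which conflates function-value Lipschitzness (bounded gradients) with $L$-smoothness (Lipschitz gradients). The cited convergence analysis requires the latter, so your insistence on re-deriving the component bounds at the level of $\nabla\mathcal{L}_i$ and then applying the descent lemma is exactly the bridge the paper omits, and without it the $O(1/\sqrt{T})$ clause does not follow from the stated hypothesis. Your treatment of the non-differentiable components (morphology, connectivity counting) via smooth surrogates or by restricting the claim to the differentiable sub-objective is also necessary; the paper silently treats all five components as if they admitted Lipschitz gradients. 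In short, your proposal is the same route made rigorous, and the two discrepancies you flag (max versus sum, and value-Lipschitz versus gradient-Lipschitz) are genuine defects of the theorem statement that any full proof would have to correct.
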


\begin{proof}[Sketch of Proof]
The result follows from the Lipschitz continuity of each $\mathcal{L}_i$, which ensures bounded gradient differences. The composite objective inherits Lipschitz continuity with constant $L=\max_i \lambda_i L_i$. 
By standard convergence analysis of stochastic gradient descent on Lipschitz-continuous nonconvex objectives~\cite{ghadimi2013stochastic}, 
this leads to $O(1/\sqrt{T})$ convergence to a stationary point.
Due to space limitations, we omit the full technical details.
\end{proof}

\begin{theorem}[Hypothesis Space Reduction]
The combined influence of spectral priors and geometric regularizers reduces the effective hypothesis space according to
\begin{equation}
\mathcal{H}_{\text{Reduced}} \approx \rho \cdot \mathcal{H}_{\text{Full}}
\end{equation}
where $\rho \propto \operatorname{Corr}(\text{HSV features}, \text{true water labels})$ quantifies the correlation between HSV-based indicators and ground truth water labels, and $\operatorname{Corr}(\cdot)$ denotes statistical correlation. 
\end{theorem}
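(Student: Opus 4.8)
The plan is to make the informal statement precise by first fixing a complexity functional that quantifies the ``size'' of a hypothesis class, and then showing that imposing the HSV-consistency constraint scales this functional by a factor governed by the correlation $\rho$. Concretely, I would take $\mathcal{H}_{\text{Full}}$ to be the class of admissible water-probability maps $\{f_\theta\}$ and measure its size by either the empirical Rademacher complexity $\hat{\mathfrak{R}}_n(\mathcal{H})$ or, equivalently for the argument, the metric entropy $\log N(\epsilon, \mathcal{H}, \|\cdot\|)$. The reduced class $\mathcal{H}_{\text{Reduced}}$ would then be defined not as all of $\mathcal{H}_{\text{Full}}$ but as the prior-consistent, low-loss sublevel set: those $f$ that fit the data while staying inside a tube around the HSV indicator $g(\mathbf{I})$ on its high-confidence region. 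Identifying the symbol $\mathcal{H}$ on both sides of the claimed relation with the value of this complexity functional $\mathcal{C}(\cdot)$ is what turns the statement into a provable inequality.

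First I would model the HSV supervision as selecting, within $\mathcal{H}_{\text{Full}}$, the subclass whose predictions align with $g$. Decomposing any prediction as $f = P_g f + P_g^\perp f$ onto the span of the HSV indicator and its orthogonal complement, the spectral term rewards the $P_g$-aligned component and suppresses the residual, so the retained subclass is effectively confined to $\operatorname{range}(P_g)$ plus a shrinking tube. Second, I would quantify how much label-relevant variation lives along $g$: by definition $\operatorname{Corr}(g,y)$ measures the alignment between the HSV indicator and the true label, and under a suitable normalization a $\rho$-proportion of the label-fitting directions are captured by $\operatorname{range}(P_g)$. Third, I would translate this reduced effective dimension into the complexity functional, using the standard fact that the Rademacher complexity (or log covering number) of a class supported on an effective-dimension-$d$ subspace scales linearly in $d$; this yields $\mathcal{C}(\mathcal{H}_{\text{Reduced}}) \approx \rho\,\mathcal{C}(\mathcal{H}_{\text{Full}})$, which is the asserted relation.

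The hard part will be the third step: rigorously connecting a second-moment statistic (the scalar correlation $\rho$) to a complexity measure, since correlation controls variance-explained whereas covering numbers are governed by entropy or effective rank rather than by a single scalar. To make both ``$\approx$'' and ``$\propto$'' precise I would need a structural assumption under which correlation and effective dimension are in one-to-one correspondence --- for instance that $\mathcal{H}_{\text{Full}}$ is (locally) the linear span of fixed features, or a sub-Gaussian process model --- and then track the resulting constants. A second, genuinely delicate obstacle is pinning the exponent and controlling the error term hidden in ``$\approx$'': depending on whether one counts retained versus removed directions and how the functional is normalized, the factor could plausibly read as $\rho$, $\rho^2$, or $1-\rho^2$, and I expect the residual error to scale with the confidence threshold of the HSV indicator. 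I would close by stating the exact constants under the linear-feature model and arguing that the general nonlinear network case follows by a local-linearization (tangent-space) argument around the constrained optimum.
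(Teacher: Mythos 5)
Your proposal is substantively more ambitious than the paper's own argument, and the two take genuinely different routes. The paper's proof sketch is purely qualitative: it observes that imposing constraints $K$ yields $H'=\{h\in H : h\models K\}\subseteq H$, then asserts (``hence'') that the effective complexity scales with the correlation $\rho$, and offloads the quantitative content to a citation on incorporating domain knowledge. It never chooses a complexity functional, never defines what multiplying a hypothesis space by a scalar means, and never derives the first-power dependence on $\operatorname{Corr}(\cdot)$. You, by contrast, make all three moves explicit: you identify $\mathcal{H}$ on both sides with a complexity functional (Rademacher complexity or metric entropy), define $\mathcal{H}_{\text{Reduced}}$ as a prior-consistent sublevel set, and attempt to convert alignment with the HSV indicator into an effective-dimension reduction via the projection decomposition $f = P_g f + P_g^\perp f$. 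What your approach buys is a statement that could actually be proved under the linear-feature or tangent-space assumptions you name; what the paper's approach buys is brevity and nothing more, since the containment $H'\subseteq H$ alone gives no rate.

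The one caveat is that the difficulty you flag in your third step --- that a scalar second-moment statistic does not by itself determine a covering-number ratio, and that the factor could equally well come out as $\rho$, $\rho^2$, or $1-\rho^2$ --- is exactly the step the paper's ``hence'' papers over, and neither you nor the paper closes it. Since the theorem asserts the specific relation $\mathcal{H}_{\text{Reduced}} \approx \rho\cdot\mathcal{H}_{\text{Full}}$ with $\rho$ to the first power, your plan as written does not yet deliver that exponent; it delivers ``some monotone function of the correlation'' pending the structural assumption. Given that the theorem itself is stated only up to ``$\approx$'' and ``$\propto$'', your executed plan under the linear-feature model would constitute a strictly stronger justification than what the paper provides, but you should state the normalization convention under which the exponent is one, or weaken the conclusion to match what the projection argument actually yields.
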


\begin{proof}[Sketch of Proof]
Domain-informed priors act as additional inductive bias, constraining the hypothesis space to models consistent with spectral and geometric properties. 
Formally, introducing constraints $K$ yields a reduced space $H'=\{h \in H : h \models K\}$ with $H' \subseteq H$. 
Hence, the effective complexity scales with the strength of correlation $\rho$ between priors and ground-truth labels. 
This parallels results on hypothesis space reduction via domain knowledge~\cite{yu2007incorporating}.
We omit the full proofs because of space limitations.
\end{proof}

\subsection{Physics-Informed HSV Supervision}
We incorporate domain-specific color priors through HSV color space analysis, thus, leveraging the physical properties of water spectral signatures. For each pixel $(i,j)$, we convert the RGB values to HSV representation $(H_{ij}, S_{ij}, V_{ij})$ and compute a physics-informed water likelihood:
\begin{equation}
P_{HSV}(i,j) = \sigma\!\left(\alpha_H H_{ij} + \alpha_S S_{ij} + \alpha_V V_{ij} + \beta\right)
\end{equation}
where $\sigma(\cdot)$ is the sigmoid function, $\{\alpha_H, \alpha_S, \alpha_V, \beta\}$ are learnable coefficients that capture the statistical relationship between HSV values and water presence, and $P_{HSV}(i,j) \in [0,1]$ represents the HSV-based water probability. The HSV-guided supervision loss is then formulated as:
\begin{equation}
\mathcal{L}_{HSV} = \frac{1}{HW} \sum_{i=1}^{H}\sum_{j=1}^{W} \left|m_{ij} - P_{HSV}(i,j)\right|^2 \cdot w_{ij}
\end{equation}
where $m_{ij} = M_{ij}$ is the predicted water probability at pixel $(i,j)$, and $w_{ij}$ is an adaptive confidence weight defined as:
\begin{equation}
w_{ij} = \exp\!\left(-\frac{d_{HSV}(i,j)^2}{2\sigma^2}\right)
\end{equation}
where $d_{HSV}(i,j)$ measures the Euclidean distance from pixel $(i,j)$ to the optimal water representation, $\sigma$ is a bandwidth parameter controlling the sensitivity of the weighting function, and $w_{ij}$ assigns higher importance to pixels with strong water-like HSV signatures.

\subsection{Geometric Coastline Regularization}
To enforce geometric consistency and topological coherence in coastline predictions, we introduce two complementary regularization terms that address different aspects of boundary quality. We first extract candidate coastline pixels through morphological operations, computing the coastline pixel set:
\begin{equation}
\mathbf{C} = \text{Dilate}(\mathbf{M}, k) - \text{Erode}(\mathbf{M}, k),
\end{equation}
where $\text{Dilate}(\cdot, k)$ and $\text{Erode}(\cdot, k)$ are morphological dilation and erosion operations, $k$ is the size of the structuring element (typically $k = 3$ for a $3 \times 3$ kernel), and $\mathbf{C}$ represents the set of boundary pixels between water and land regions. To ensure geometric consistency, we penalize irregular gradient fluctuations through a smoothness regularization term:
\begin{equation}
\mathcal{L}_{Coast} = \frac{1}{|\mathbf{C}|} \sum_{(i,j)\in \mathbf{C}} \|\nabla \mathbf{M}(i,j)\|_2^2,
\end{equation}
where $|\mathbf{C}|$ is the cardinality of the coastline pixel set, $\nabla \mathbf{M}(i,j) = [\frac{\partial M}{\partial x}, \frac{\partial M}{\partial y}]_{(i,j)}$ is the spatial gradient of the mask at pixel $(i,j)$, and $\|\cdot\|_2$ denotes the $L_2$ norm. This term encourages smooth transitions, effectively reducing jagged artifacts and promoting natural coastline curvature.

Furthermore, we enforce topological consistency by penalizing columns with multiple disconnected water regions through a connectivity constraint:
\begin{equation}
\mathcal{L}_{Conn} = \sum_{x=1}^{W} \max\!\left(0, \frac{\text{ConnectedRegions}(\mathbf{M}[:,x])-1}{\text{MaxRegions}}\right)
\end{equation}
where $\mathbf{M}[:,x]$ represents the $x$-th column of the water mask, $\text{ConnectedRegions}(\cdot)$ counts the number of connected components in a 1D binary sequence, $\text{MaxRegions}$ is a normalization factor preventing unbounded loss values, and the $\max(0, \cdot)$ operation ensures non-negative penalties.

\subsection{Regional Consistency Enforcement}

To promote spatial homogeneity in large water bodies and reduce prediction noise within uniform regions, we introduce a variance-based sea cleanup regularizer:
\begin{equation}
\mathcal{L}_{\text{Sea}} = \frac{1}{|\mathbf{S}|} \sum_{(i,j)\in \mathbf{S}} \operatorname{Var}\!\big(\mathbf{M}[\mathcal{N}(i,j)]\big)
\end{equation}
where $\mathbf{S}$ denotes the set of detected sea pixels (identified through connected component analysis with area thresholding), $\mathcal{N}(i,j)$ represents a local neighborhood around pixel $(i,j)$ (typically a $5 \times 5$ window), $\mathbf{M}[\mathcal{N}(i,j)]$ extracts the mask values within the neighborhood, and $\operatorname{Var}(\cdot)$ computes the statistical variance. 

\subsection{Unified Robust Objective}

All components are integrated into a single robust objective function that balances multiple complementary aspects of coastline quality:
\begin{equation}
\begin{split}
\mathcal{L}_{Robust} 
   &= \lambda_{CE} \mathcal{L}_{CE} + \lambda_{HSV} \mathcal{L}_{HSV} \\
   &\quad + \lambda_{Coast} \mathcal{L}_{Coast} 
   + \lambda_{Conn} \mathcal{L}_{Conn} 
   + \lambda_{Sea} \mathcal{L}_{Sea}
\end{split}
\end{equation}
where $\mathcal{L}_{CE}$ is the standard pixel-wise cross-entropy loss for semantic segmentation, defined as:
\begin{equation}
\mathcal{L}_{CE} = -\frac{1}{HW} \sum_{i=1}^{H}\sum_{j=1}^{W} \left[ y_{ij} \log(m_{ij}) + (1-y_{ij}) \log(1-m_{ij}) \right]
\end{equation}
where $y_{ij} \in \{0,1\}$ is the ground truth label for pixel $(i,j)$, and $\{\lambda_{CE}, \lambda_{HSV}, \lambda_{Coast}, \lambda_{Conn}, \lambda_{Sea}\}$ are coefficients that control the relative importance of each loss component.

The unified objective strategically balances four fundamental aspects of robust coastline detection: semantic fidelity through $\mathcal{L}_{CE}$ ensures accurate pixel-level classification, spectral consistency via $\mathcal{L}_{HSV}$ incorporates physics-informed color priors, geometric regularity enforced by $\mathcal{L}_{Coast}$ and $\mathcal{L}_{Conn}$ promotes smooth and topologically consistent coastlines, and regional uniformity achieved through $\mathcal{L}_{Sea}$ maintains spatial coherence in homogeneous regions. 

\section{Experimental Evaluation}
\subsection{Datasets}
Experiments are conducted on coastal imagery from St Andrews region, Scotland, acquired via Sentinel-2 satellite images from May 2017 to May 2025. The dataset comprises 954 high-resolution images with pixel-level water segmentation annotations. Images utilize NIR-Red-Green band combination for enhanced water-land contrast. Dataset split: 80\% training, 20\% validation with stratified sampling. We compare Robust U-Net against the Traditional U-Net with identical architecture but standard cross-entropy loss without robust enhancements. This isolates the contribution of our proposed components.
\begin{figure}[t]
  \centering
  \includegraphics[width=0.9\linewidth]{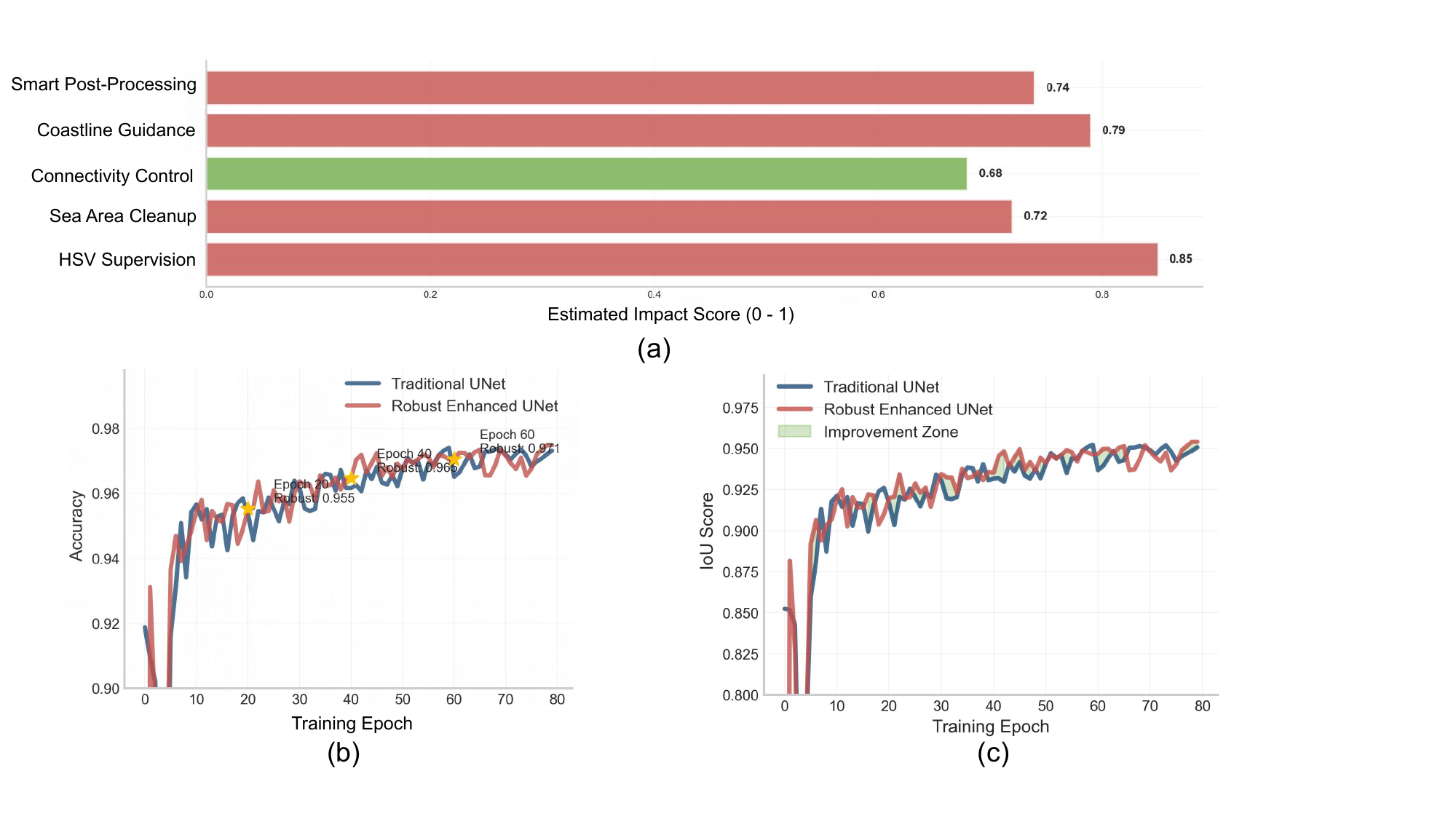}
  \caption{Comprehensive ablation and training curves.}
  \label{fig:robust_results1}
\end{figure}

\subsection{Quantitative Results and Ablation Studies}
\begin{figure}[t]
  \centering
  \includegraphics[width=0.9\linewidth]{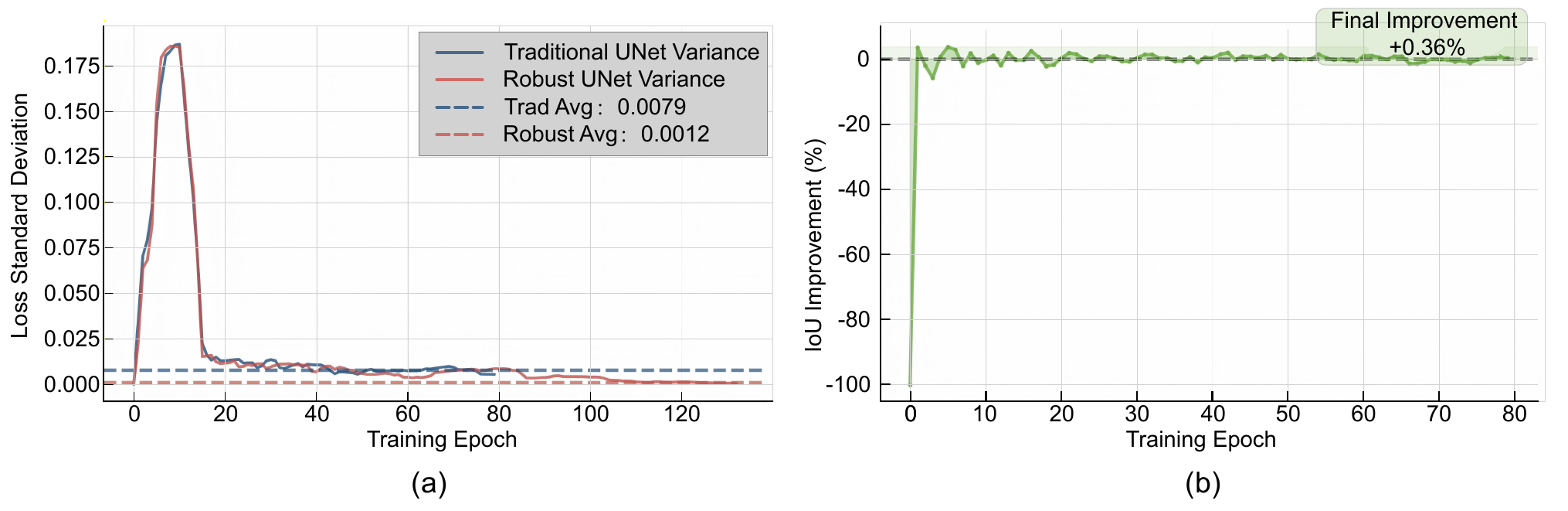}
  \caption{Evaluation of the proposed robust enhancement framework.}
  \label{fig:robust_results2}
\end{figure}

\begin{figure}[t]
  \centering
  \includegraphics[width=0.9\linewidth]{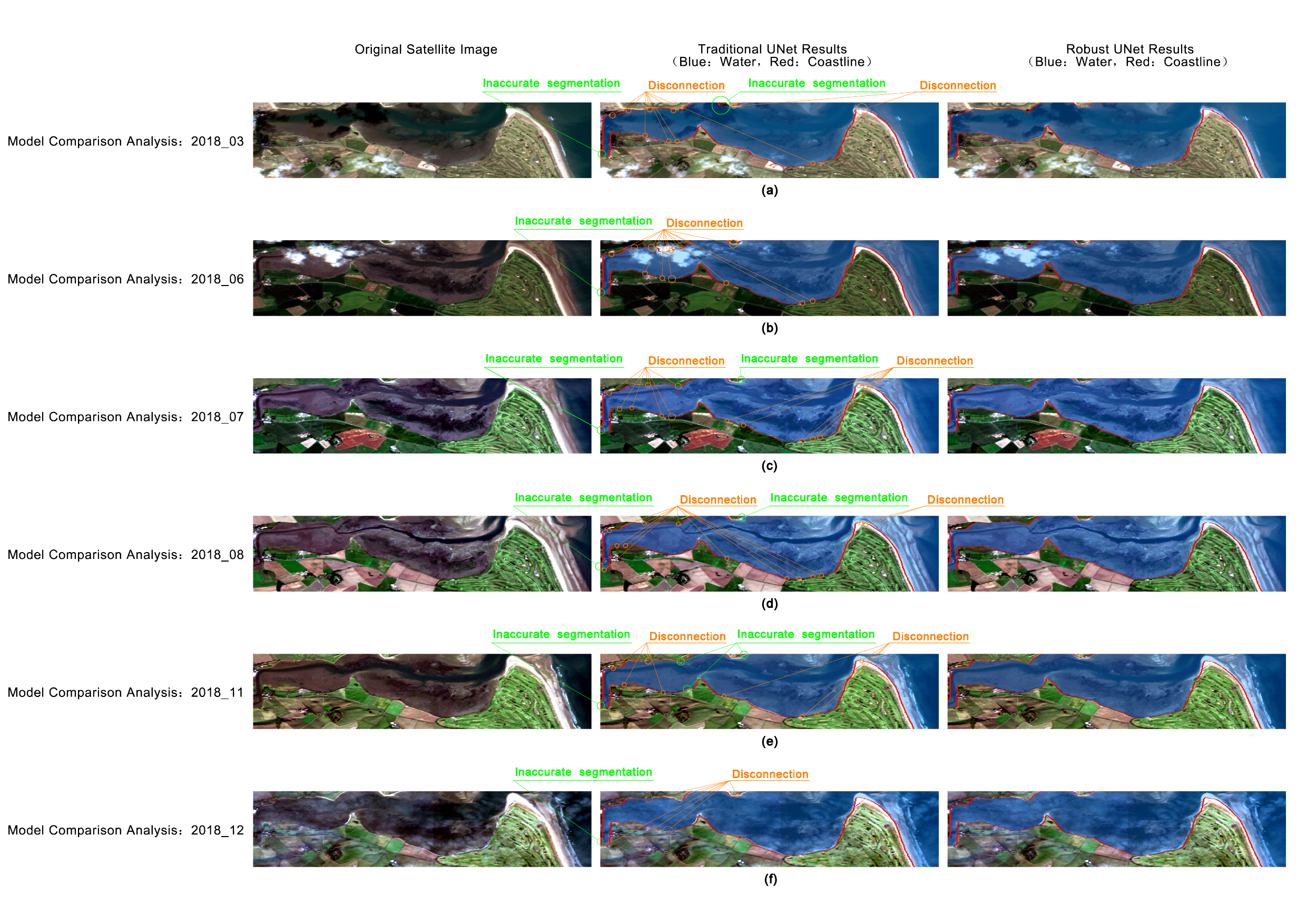}
  \caption{Segmentation examples comparison between the proposed Robust U-Net and vanilla U-Net.}
  \label{fig:model_comp}
\end{figure}

Fig.~\ref{fig:robust_results1} highlights both the individual and overall benefits of the enhancement framework. Sub-figure (a) shows that HSV-guided supervision yields the strongest impact (0.85), followed by gradient-based coastline optimization (0.79) and morphological refinement (0.74), while connectivity control (0.68) and sea cleanup (0.72) provide additional stability. Fig.~\ref{fig:robust_results2} (a) and (b) confirm that the Robust U-Net converges as fast as the baseline while achieving higher accuracy and IoU with significantly reduced variance (from $7.9\times10^{-3}$ to $1.2\times10^{-3}$, an 84\% drop). 
Sub-figure (d) further shows a stable IoU gain of +0.36\% in later epochs, demonstrating that the integrated constraints improve not only accuracy but also training stability and reliability. 

\subsection{Qualitative Robustness Analysis}
\label{sec:qual}
As illustrated in Fig.~\ref{fig:model_comp}, the Robust U-Net consistently outperforms the traditional U\mbox{-}Net across temporal scenes. While the baseline often suffers from inaccurate segmentation of vegetation or farmland as water, broken or disconnected coastlines, and unstable predictions under seasonal or illumination changes, our framework produces smoother and more continuous shoreline boundaries, effectively suppresses false positives, and preserves realistic connectivity. 

\begin{figure}[t]
  \centering
  \includegraphics[width=0.9\linewidth]{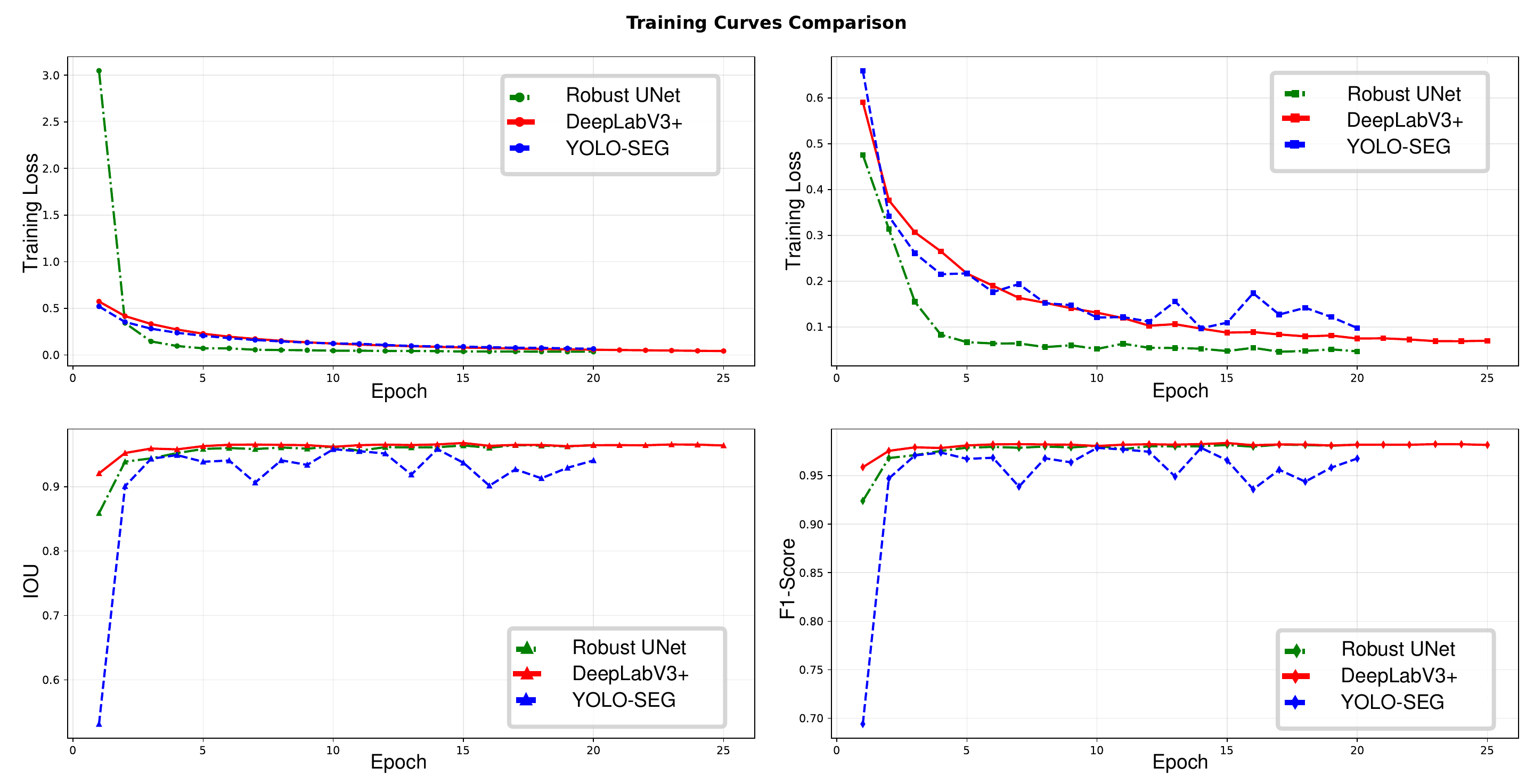}
  \caption{Training results comparison with popular benchmarks.}
  \label{fig:b_comp}
\end{figure}

\subsection{Comparison with other benchmarks}
Robust U\mbox{-}Net is further enhanced with dropout regularization, multi-scale dilated convolutions, channel and spatial attention, and Kaiming initialization for improved stability. As shown in Fig.~\ref{fig:b_comp}, these modifications lead to faster convergence, lower training and validation loss, and consistently superior segmentation accuracy compared to DeepLabV3+ and YOLO-SEG. The Robust U\mbox{-}Net achieves stable IoU around 0.95--0.96 and F1-scores above 0.98, clearly outperforming the less stable DeepLabV3+ and the volatile YOLO-SEG. This confirms that attention mechanisms and multi-scale features substantially improve both convergence and generalization in coastal water segmentation; models' performances are shown in Table~\ref{tab:final_eval}.
\begin{table}[ht]
\centering
\caption{Evaluation Results among Different Models}
\label{tab:final_eval}
\renewcommand{\arraystretch}{1.8}
\setlength{\tabcolsep}{1pt} 
\scriptsize 
\begin{tabular}{l|ccc}
\hline\hline
\textbf{Model} & \textbf{IoU} & \textbf{F1} & \textbf{Acc} \\
\hline
\textbf{Robust U\mbox{-}Net} & $\mathbf{0.9645 \pm 0.003}$ & $\mathbf{0.9819 \pm 0.002}$ & $\mathbf{0.9810 \pm 0.002}$ \\
DeepLabV3+          & $0.9639 \pm 0.005$ & $0.9816 \pm 0.003$ & $0.9806 \pm 0.003$ \\
YOLO-SEG            & $0.9407 \pm 0.076$ & $0.9676 \pm 0.046$ & $0.9684 \pm 0.040$ \\
\hline\hline
\end{tabular}
\end{table}

\section{Conclusions}
We introduce a robust framework for coastal water segmentation that integrates HSV-guided supervision, geometric and morphological constraints, and sea cleanup regularization. Theory and experiments show improved stability, faster convergence, and higher accuracy over baselines. Robust U-Net achieves the best performance (IoU 0.9645, F1 0.9819, accuracy 0.9810) with more reliable boundaries. Future work includes multimodal fusion and dynamic monitoring.

\section*{Acknowledgment}
This work is partially funded by the EU TERRA \# 101189962.

\bibliographystyle{IEEEbib}
\bibliography{strings,refs}

\end{document}